\title{Explaining black box decisions by Shapley cohort refinement}
\author{
   Masayoshi Mase\\Hitachi, Ltd.
   \and
   Art B. Owen \\ Stanford University
   \and
   Benjamin Seiler \\ Stanford University
}
\date{September 2020}
\renewcommand{\le}{\leqslant}
\renewcommand{\ge}{\geqslant}
\renewcommand{\emptyset}{{\varnothing}}
\newtheorem{lemma}{Lemma}
\newtheorem{theorem}{Theorem}
\newcommand{\giv}{\!\mid\!}
\newcommand{\real}{\mathbb{R}}
\newcommand{\bsc}{\boldsymbol{c}}
\newcommand{\bse}{\boldsymbol{e}}
\newcommand{\bsx}{\boldsymbol{x}}
\newcommand{\bsy}{\boldsymbol{y}}
\newcommand{\bsz}{\boldsymbol{z}}
\newcommand{\bsX}{\boldsymbol{X}}
\newcommand{\val}{\mathrm{val}}
\newcommand{\bszero}{\boldsymbol{0}}
\newcommand{\bsone}{\boldsymbol{1}}
\newcommand{\e}{\mathbb{E}}
\newcommand{\var}{\mathrm{var}}
\newcommand{\rd}{\,\mathrm{d}}
\newcommand{\cs}{\mathrm{CS}}
\newcommand{\bs}{\mathrm{BS}}
\newcommand{\ces}{\mathrm{CES}}
\newcommand{\abs}{\mathrm{ABS}}
\begin{document}
\maketitle

\begin{abstract}
We introduce a variable importance measure to
quantify the impact of individual input variables to a black box function. Our measure is
based on the Shapley value from cooperative game
theory.  Many measures of variable importance
operate by changing some predictor values with
others held fixed, potentially creating unlikely or even logically impossible combinations.
Our cohort Shapley measure uses only
observed data points.
Instead of changing the value of a predictor we
include or exclude subjects similar to the target
subject on that predictor to form a similarity
cohort. Then we apply Shapley value to the cohort averages.
We connect variable importance measures from explainable AI to function decompositions from global sensitivity analysis.
We introduce a squared cohort Shapley value that splits previously studied Shapley effects over subjects, consistent with a Shapley axiom.
\end{abstract}

\section{Introduction}

Black box prediction models used in statistics, machine learning and
artificial intelligence have been able to make increasingly accurate
predictions, but it remains hard to understand those predictions.
Part of understanding predictions is understanding which
variables are important.
Variable importance in machine learning and AI has been studied recently in
\citet{stru:kono:2010,stru:kono:2014}, \citet{ribe:etal:2016},
\citet{lund:lee:2017},
\citet{sund:najm:2019},
\citet{kuma:2020}
as well as the book of \citet{moln:2018}.

A variable could be important because changing
it makes a causal difference, or because
changing it makes a large change to our predictions
or because leaving it out of a model reduces
that model's prediction accuracy
\citep{jian:owen:2003}. Importance by one of these
criteria need not imply importance by another.

In order to explain a decision like somebody being admitted
to the intensive care unit or turned down for a loan,
we are not focused on causality. The variables that
are predictive of somebody's outcome
ideally should have been the ones that were important
to the decision, but they may not have been.
Similarly, variables whose inclusion
is necessary for a good model fit are not necessarily
the ones that best explain an outcome.
The explanation should use the model that was deployed, and not any others.

We are left with a problem of explaining
the output of one specific fitted function $f(\cdot)$
applied to a vector $\bsx$ of input variables,
hereafter predictors.
For a target subject $t$ with a $d$-dimensional set of predictors $\bsx_t=(x_{t1},\dots,x_{td})$
we can change some subset of values of $\bsx_t$,
perhaps to some baseline levels in $\bsx_b$.
Then the variables that best explain why $f(\bsx_t)$
is different from $f(\bsx_b)$ are deemed important
for the decision on the target subject.
Given $2^d-1$ possible changes, there are strong
reasons to favor Shapley value \cite{shap:1952}
to combine them into an importance measure
as \citet{sund:najm:2019} and others that they survey do.
We note however, that concerns about using Shapley
value have been raised by \citet{kuma:2020}.

Mixing and matching the components
of $\bsx_t$ and $\bsx_b$
presents some problems.
The variables $x_{ij}$ and $x_{ik}$ may
show a strong correlation over subjects $i=1,\dots,n$.
Putting $x_{tj}$ and $x_{bk}$ into a
single hypothetical
point may produce an input combination
far from any that has ever been seen.
Beyond being unusual, some combinations are
physically or even logically impossible.
The changes might produce a hybrid data point
representing a patient whose systolic blood
pressure is lower than their diastolic blood
pressure.  Somebody's birth date could
follow their graduation date.  When hospital records show minimum, maximum and average levels
of blood oxygen, the hybrid point could have
mean $O_2$ below minimum $O_2$, or it could
have minimum and maximum that differ along
with a variable saying they were only measured
once (or never).
There could be important reasons to understand
effects of longitude and latitude separately, but some
combinations will not make sense, perhaps
by placing a dwelling within a body of water.

When the function $f(\cdot)$ that made
a decision was trained, it
would have seen few if any impossible
or extremely unlikely inputs.
As a result, its predictions there cannot
have been regularized suitably. Investigators
should be able to choose an importance measure that does not rely on any such values.

In this paper we work under the constraint
that only possible values can be used.
It is hard to define possible values and so
operationally we only use actually observed values.
For each predictor, every subject in the
data set is either similar to the target subject
or not similar.  Ways to define similarity
are discussed below.
Given $d$ predictors, there are $2^d$ different sets of
predictors on which subjects can be similar to the target.
We form $2^d$ different cohorts of subjects, each consisting
of subjects similar to the target on a subset of predictors,
without regard to whether they are also similar on any
of the other predictors.
At one extreme is a set of all predictors, and a cohort that
is similar to the target in every way.  At the other extreme,
the empty predictor set yields the set of all subjects. As usual, for large $d$, Monte Carlo sampling is used to approximate the Shapley value.

We can refine the grand cohort of all subjects towards the
target subject by removing subjects that mismatch the
target on one or more predictors.  The predictors that
change the cohort mean the most when we restrict to
similar subjects, are the ones that
we take to be the most important in explaining why
the target subject's prediction is different from that of the other subjects.

We call this approach `cohort refinement Shapley'
or cohort Shapley for short.
With baseline Shapley, taking account of variable
$j$ is done by replacing $x_{bj}$ by $x_{tj}$.
Cohort Shapley doesn't change the value. It refines
what we know about $\bsx_t$. Variables whose
knowledge moves the cohort mean the most
are then the important ones.

This knowledge approach has some consequences.
If one is studying redlining by an algorithm
that simply does not use a protected variable then
baseline Shapley can only give that variable zero
importance.  Cohort Shapley can give that variable
a nonzero importance measure even if it was not
in the data set at the time the model was trained.
It is thus useful for the problem of
auditing an algorithm for indirect inferences,
a task described by \citet{adle:2018}.
\citet{kuma:2020} describe this issue as
a catch-22.  In their terms, cohort Shapley
is a conditional method (changing knowledge)
while baseline Shapley is an interventional method (changing values).
The conditional approach has the problem of requiring
a joint distribution for predictors, while the interventional approach
can depend on impossible predictor combinations.
The word `importance'
does not have a unique interpretation. Multiple
ways to quantify it estimate different quantities and  involve tradeoffs. We expect that multiple methods
must be considered in practice.

\citet{sund:najm:2019} and \citet{kuma:2020}
mention a second problem with baseline Shapley.
If two predictors are highly correlated
then the algorithm might have used them
very unequally, perhaps due to an $L_1$
regularization.  It then becomes unpredictable
which will be deemed important.
The situation is more stable with cohort Shapley.
In the extreme case of identical variables
and identical similarity notions, cohort Shapley
will give them equal importance.


As mentioned above,
we define the impact of a variable through the Shapley value.
Shapley value has been used in model
explanation for machine learning \citep{stru:kono:2010,stru:kono:2014,lund:lee:2017,sund:najm:2019} and
for computer experiments \citep{owen:2014,song:nels:stau:2016,owen:prie:2017}.
KernelSHAP \citep{lund:lee:2017} computes an approximation of baseline Shapley replacing $f$ by a kernel approximation.
KernelSHAP provides an option to compute the average of Shapley values for multiple baseline vectors, and one very reasonable choice is
to use every training point as a baseline point. We call that `all baseline Shapley' (ABS), below.

Our contribution is to further
investigate the differences between baseline and cohort Shapley 
and to connect them to the global sensitivity analysis literature
which emphasizes Sobol' indices
\citep{sobo:1993}.
For surveys, see \citet{salt:ratt:andr:camp:cari:gate:sais:tara:2008}
and \citet{gamb:jano:klei:lagn:prie:2016}.
We show how both baseline Shapley value and cohort Shapley value are derived from functional decompositions.
Variable importance for a single target subject $t$ is `local' as distinct from `global' prediction importance, taken over all subjects. We consider aggregation and disaggregation
between local and global problems. While Shapley value is additive
over games, some unwanted cancellations can arise
during aggregation.
For instance, the local importances can sum to a global importance of zero.
We resolve this by
using squared differences for local and global methods.
We illustrate these measures on the well known Boston housing and Titanic data sets.


\section{Notation and Background}\label{sec:background}
The predictor vector for subject $i$ is
$\bsx_i =(x_{i1},\dots,x_{id})$ where $d$ is the number of predictors
in the model.  Each $x_{ij}$ belongs to a set $X_j$ which
may consist of real or binary variables or some
other types.  There is a black box function $f:\bsx\to\real$ that is used
to predict an outcome for a subject with predictor vector $\bsx$.  We write
$y=f(\bsx)$ and $y_i=f(\bsx_i)$.
For local explanation we would like to quantify the importance of predictors $x_{tj}$ on $y_t = f(\bsx_t)$.
We assume that $t$ is in the
set of subjects with available data, although this subject might not have been
used in training the model.
A baseline point $\bsx_b$ might not be one of the subjects.  For instance if $\bsx_b=(1/n)\sum_{i=1}^n\bsx_i$ it will ordinarily not be a data point. For example, $\bsx_{b,j}$ might be in $(0,1)$ for a binary variable $j$.

The set $\{1,2,\dots,n\}$ of subjects is denoted by $1{:}n$ and the set of predictors is $1{:}d$.
We will need to manipulate subsets of $1{:}d$.
For $u\subseteq 1{:}d$ we let $|u|$ be its
cardinality.  The complementary set $1{:}d\setminus u$
is denoted by $-u$, especially in subscripts.
Sometimes a point must be created by combining
parts of two other points.
The point $\bsy =\bsx_u{:}\bsz_{-u}$ has $y_j =x_j$ for $j\in u$
and $y_j = z_j$ for $j\not\in u$.
Furthermore, we sometimes use $j$ and $-j$ in place of
the more cumbersome $\{j\}$ and $-\{j\}$.
For instance, $\bsx_{t,-j}{:}\bsx_{b,j}$ is what we get by changing the $j$'th predictor of the target subject to the baseline level.
We also use $u + j$ for $u\cup\{j\}$.

\subsection{Function Decompositions}
Function decompositions, also called
high dimensional model representations (HDMR),
write a function of $d$ inputs as a sum of functions,
each of which depend only on one of the $2^d$
subsets of inputs.
Because $f$ and $\bsx$ have other uses in this paper,
we present the decomposition for $g(\bsz)$.
Let $g$ be a function of $\bsz=(z_1,\dots,z_d)$
with $z_j\in Z_j$.
In these decompositions we write
$$
g(\bsz) = \sum_{u\subseteq1{:}d}g_u(\bsz)
$$
where $g_u(\bsz)$ depends on $\bsz$ only through $\bsz_u$.
Many such decompositions are possible~\citep{kuo:sloa:wasi:wozn:2010}.

The best known decomposition for this problem is the functional
analysis of variance (ANOVA) decomposition
\citep{hoef:1948,sobo:1969}.
It applies to random $\bsz$ with
independent components $z_j\in Z_j$.
If $\e( g(\bsz)^2)<\infty$,
then we write
\begin{align*}
g_\emptyset(\bsz) &= \mu \equiv \e(g(\bsz)),\quad\text{followed by}\\
g_u(\bsz)
&= \e\biggl(g(\bsz) - \sum_{v\subsetneq u}g_v(\bsz)\Bigm| \bsz_u\biggr)\\
&= \e(g(\bsz)\giv \bsz_u) - \sum_{v\subsetneq u}g_v(\bsz),
\end{align*}
for non-empty $u\subseteq1{:}d$.
The effects $g_u$ are orthogonal in that for subsets
$u\ne v$, we have
$\e( g_u(\bsz)g_v(\bsz))=0.$
Letting $\sigma^2_u = \var(g_u(\bsz))$, it follows from this orthogonality
that
$$
\sigma^2(g)\equiv \var(g(\bsz)) = \sum_{u\subseteq1{:}d}\sigma^2_u(g).
$$
We can recover effects from conditional expectations, via inclusion-exclusion,
\begin{align}\label{eq:incexcanova}
g_{u}(\bsz) &= \sum_{v\subseteq u}(-1)^{|u-v|}\e( g(\bsz) \giv \bsz_v).
\end{align}
See \citet{owen:2013} for history and derivations of this
functional ANOVA.

We will need the anchored decomposition, which goes back
at least to \citet{sobo:1969}.
It is also called cut-HDMR \citep{alis:rabi:2001} in chemistry,
and finite differences-HDMR in global sensitivity analysis \citep{sobo:2003}.
We begin by picking a reference point $\bsc$
called the anchor, with $c_j\in Z_j$ for $j=1,\dots,d$.
The anchored decomposition is
\begin{align*}
g(\bsz)  & = \sum_{u\subseteq1{:}d}g_{u,\bsc}(\bsz),\quad\text{with}\\
g_{\emptyset,\bsc}(\bsz) & = g(\bsc),\quad\text{and}\\
g_{u,\bsc}(\bsz) & = g(\bsz_u{:}\bsc_{-u})
-\sum_{v\subsetneq u}g_{v,\bsc}(\bsz).
\end{align*}
We have replaced averaging over $\bsz_{-u}$ by plugging
in the anchor value via $\bsz_{-u}=\bsc_{-u}$.
If $j\in u$ and $z_j=c_j$, then $g(\bsz)_{u,\bsc}=0$.
We do not need independence of the $z_j$, or even randomness for them
and we do not need mean squares.
What we need is
that when $g(\bsz)$ is defined, so is $g(\bsz_u{:}\bsc_{-u})$
for any $u\subseteq1{:}d$.

The main effect in an anchored decomposition is
$g_{j,\bsc}(\bsz) = g(\bsz_j{:}\bsc_{-j})-g(\bsc)$
and the two factor term for indices $j\ne k$ is
\begin{align*}
g_{\{j,k\},\bsc}(\bsz)
 = g(\bsz_{\{j,k\}}{:}\bsc_{-\{j,k\}})
-g_{j,\bsc}(\bsz) -g_{k,\bsc}(\bsz)-g_\emptyset(\bsz)\\
= g(\bsz_{\{j,k\}}{:}\bsc_{-\{j,k\}})
-g(\bsz_j{:}c_{-j})
-g(\bsz_k{:}c_{-k})
+g(\bsc).
\end{align*}
The version of~\eqref{eq:incexcanova} for the anchored decomposition is
\begin{align*}
g_{u,\bsc}(\bsz) &= \sum_{v\subseteq u}(-1)^{|u-v|}g(\bsz_v{:}\bsc_{-v}),
\end{align*}
as shown by \citet{kuo:sloa:wasi:wozn:2010}.

\subsection{Shapley Value}\label{sec:shapleyvalue}

Shapley value \citep{shap:1952} is used in game theory
to define a fair allocation of rewards to a team that
has cooperated to produce something of value.
Suppose that a team of $d$ people produce
a value $\val( 1{:}d)$, and that we have at our disposal
the value $\val(u)$ that would have been produced
by the team $u$,
for all $2^d$ teams $u\subseteq1{:}d$.
Let $\phi_j$ be the reward for player $j$.

Shapley introduced quite reasonable criteria:
\begin{compactenum}[\quad 1)]
\item Efficiency: $\sum_{j=1}^d\phi_j = \val(1{:}d)$.
\item Symmetry: If $\val(u+i)=\val(u+j)$ for all $u\subseteq1{:}d\setminus\{i,j\}$,
then $\phi_i=\phi_j$.
\item Dummy: if $\val(u+j) = \val(u)$ for all $u\subseteq1{:}d\setminus\{j\}$,
then $\phi_j=0$.
\item Additivity:
if $\val(u)$ and $\val'(u)$ lead to values $\phi_j$ and $\phi_j'$
then the game producing $(\val+\val')(u)$ has values $\phi_j+\phi'_j$.
\end{compactenum}
He found that the unique valuation that satisfies all
four of these criteria is
\begin{align}\label{eq:shapj}
\phi_j = \frac1d\sum_{u\subseteq -j}
{d-1\choose |u|}^{-1}(\val(u+j)-\val(u)).
\end{align}

Formula~\eqref{eq:shapj} is not very intuitive.
Another way to explain Shapley value is as follows.
We could build a team from $\emptyset$
to $1{:}d$ in $d$ steps, adding one member at a time.
There are $d!$ different orders in which
to add team members. The Shapley value $\phi_j$
is the increase in value coming from the addition of
member $j$, averaged over all $d!$ different orders. For large $d$, random permutations can be sampled to estimate~$\phi_j$.
From equation~\eqref{eq:shapj} we see that Shapley value does not change if we add or  subtract the same quantity from all $\val(u)$. It can be convenient to make $\val(\emptyset)=0$.

\subsection{Shapley Value for Local Explanation}
When we apply Shapley value to the black box function $f(\bsx)$ of the
target data subject $\bsx_t$, we define a player as an input predictor
$x_{tj}$.
The presence of the player $j$ means that the value of
$x_{tj}$ is known to the black box function $f(\bsx_t)$.
Shapley additive explanation (SHAP) of \citet{lund:lee:2017} uses the value function
$$
\val(u) = \e(f(\bsx)\giv\bsx_{u}) - \e(f(\bsx)).
$$
The value of $\e(f(\bsx)\giv\bsx_u)$ depends on
the data distribution that is implicitly or explicitly assumed.
\citet{sund:najm:2019} and \citet{kuma:2020} give various definitions.

Interventional Shapley values assume that all combinations of input predictors are possible. That is, the inputs are functionally independent. 
Baseline Shapley (BS) \citep{sund:najm:2019} defines the value function
for a predictor set as
$$
\val_{\bs}(u) = f(\bsx_{t,u}{:}\bsx_{b,-u}) - f(\bsx_b).
$$
It replaces predictors one by one from baseline to subject value.
KernelSHAP \citep{lund:lee:2017} computes baseline Shapley value for a kernel regression approximation to $f$, for computational efficiency. It also supports ABS, which like the quantitative input influence (QII) measure of \citet{datt:2016} assumes statistically independent predictors, using the empirical margins.

Conditional Shapley values such as cohort Shapley allow
for predictor variable dependence.
When the similarity measures are all identity, then cohort Shapley, defined below, is equivalent to the
Conditional Expectation Shapley of \citet{sund:najm:2019} on the empirical distribution $\hat D$ of the data, denoted
$\ces(\hat D)$.
%
Cohort Shapley extends $\ces(\hat D)$ with similarity measures
to better handle continuous variables.
CondKernelSHAP \citep{aas:2019}  implements conditioning using kernel weights derived from the Mahalanobis distance and optionally restricted to $K$-nearest neighbor points only. 

\section{Cohort Shapley}\label{sec:simcohorts}
%
%
%
%
The cohort Shapley measure designates subjects as similar or not similar to the target subject $t$ for each of $d$ predictors. Then for each subset of predictors, there is a set of subjects similar to the target for all of those predictors. We call these subsets cohorts.  The cohorts are not empty because they all include subject $t$. 
Cohort Shapley value is Shapley value
applied to cohort means.
First we describe similarity.

\subsection{Similarity}
For each predictor $j$, we define a target-specific
similarity function $z_{tj}:X_j\to\{0,1\}$.
If $z_{tj}(x_{ij})=1$, then subject $i$ is considered
to be similar to subject $t$ as measured by predictor $j$.
Otherwise $z_{tj}(x_{ij})=0$ means that subject $i$
is dissimilar to subject $t$ for predictor $j$. We always have $z_{tj}(x_{tj})=1$.
The simplest similarity is identity:
$$
z_{tj}(x_{ij}) =
\begin{cases}
1, & x_{ij}=x_{tj},\\
0, & \mathrm{else},
\end{cases}
$$
which is reasonable for binary predictors or those
taking a small number of levels.
For real-valued predictors, there may be no $i\ne t$
with $x_{ij}=x_{tj}$ and then  we might instead take
$$
z_{tj}(x_{ij}) =
\begin{cases}
1, & |x_{ij}-x_{tj}|\le\delta_{tj},\\
0, & \mathrm{else},
\end{cases}
$$
where subject matter experts have chosen $\delta_{tj}$.
Taking $\delta_{tj}=0$ recovers the identity measure of similarity.
The two similarity measures above generate an
equivalence relation on $X_j$, if $\delta_{tj}$
does not depend on $t$. In general, we do not
need $z_{tj}$ to be an equivalence.
For instance, we do not need $z_{tj}(x_{ij})=z_{ij}(x_{tj})$
and would not necessarily have that if we used relative
distance to define similarity, via
$$
z_{tj}(x_{ij}) =
\begin{cases}
1, & |x_{ij}-x_{tj}|
\le\delta_{j}|x_{tj}|,\\
0, & \mathrm{else}.
\end{cases}
$$
In some settings equivalence is a useful simplification. 

\subsection{Cohorts of $t$}

We use $1{:}n=\{1,2,\dots,n\}$ to define our set of
subjects. Let
$$C_{t,u}
=
\{ i \in 1{:}n \mid z_{tj}(x_{ij})=1,\,\ \text{for all $j\in u$}
\},$$
with $C_{t,\emptyset}=1{:}n$ by convention.
Then $C_{t,u}$ is the cohort of subjects that are similar
to the target subject for all predictors $j\in u$
but not necessarily similar for any predictors $j\not\in u$.
These cohorts are never empty, because we always have $t\in C_{t,u}$.
We write $|C_{t,u}|$ for the cardinality of the cohort.
As the cardinality of $u$ increases, the cohort
$C_{t,u}$ focuses in on the target subject.

Given a set of cohorts, we define cohort averages
$$
\bar y_{t,u} = \frac1{|C_{t,u}|}\sum_{i\in C_{t,u}}y_i.
$$
Then the value of set $u$ is
$$
\val_{\cs}(u) =\bar y_{t,u}-\bar y_{t,\emptyset}=\bar y_{t,u}-\bar y,
$$
where $\bar y=(1/n)\sum_{i=1}^ny_i$.
The last equality follows because the cohort with $u=\emptyset$ is the whole data set.
The total value to be explained is
$$\val_{\cs}(1{:}d) =\bar y_{t,1:d}-\bar y_{t,\emptyset}
=\bar y_{t,1{:}d}-\bar y.
$$
It may happen that $C_{t,1:d}$ is the singleton $\{t\}$.
In that case, the total value to be explained is $f(\bsx_t)-\bar y$.
In this and other settings, some of the value $\bar y_{t,u}$ may be the
average of a very small number of subjects' predictions,
and potentially poorly determined but much better
determined than points that are extreme extrapolations
or are even impossible.

\subsection{Importance Measures}

Equation~\eqref{eq:shapj} yields cohort Shapley values
$$
\phi_j^{\cs} = \frac1d \sum_{u\subseteq -j}
{ d-1 \choose |u|}^{-1} ( \bar y_{t,u+j}-\bar y_{t,u}).
$$
Here $\bar y_{t,u+j}-\bar y_{t,u}$ is the difference that refining
on predictor $j$ makes when we have already refined
on the predictor set~$u$.

\section{Disaggregation of Global Importance}\label{sec:global}

We use Shapley value for local explanation because it uses no impossible data combinations.
%
Here we show that the Shapley effects of \citet{song:nels:stau:2016} can be disaggregated into a local squared cohort Shapley effect, when the predictors are given their empirical distribution.

Shapley effects arise from a value function $\val_\var(u)=\var(\e(y\giv\bsx_u))$.
When $\bsx$ has independent components, then \citet{owen:2014} shows that the Shapley value for $j$ is
\begin{align}\label{eq:shapvar}
\phi_j^{\var} = \sum_{u\subseteq1{:}d, \,j\in u}\frac{\sigma^2_u}{|u|},
\end{align}
but the dependent data case is much harder.

We may write the value function as
\begin{align*}
\val_{\var}(u) 
&= \int(\e(f({\bsx})\giv\bsx_u) - \e(f(\bsx)))^2 P(\bsx_u)\rd{\bsx_u}
\end{align*}
where $P(\bsx)$ is the probability density function of $\bsx$
and we retain the same notation for discrete $\bsx$.
We take $P(\bsx)$ to be the empirical distribution of the data set $\bsX=\{\bsx_1,\dots,\bsx_n\}$ coarsened by a similarity function. Then $P(\bsx\giv\bsx_{t,u})$ represents 
the discrete distribution on $\bsx_i$ obtained by choosing subject $i$ uniformly at random from the cohort $C_{t,u}$. 
Then under~$P$, 
$$\e(f(\bsx)\giv\bsx_{t,u}) = \frac1{|C_{t,u}|} \sum_{i \in C_{t,u}}f(\bsx_i)=\bar y_{t,u}.$$ 
We also have $\e(f(\bsx)) = \bar y$ for this $P$.
Finally, $P$ places equal probability $1/n$ on the data and so
$$
\val_\var(u) =\frac1n\sum_{t=1}^n(\bar y_{t,u}-\bar y)^2.
$$
%


Next, in addition to ordinary cohort Shapley with value function $\val_{\cs}(t,u)=\bar y_{t,u}-\bar y_\emptyset=\bar y_{t,u}-\bar y$ we introduce
a squared version with
$$
\val_{\cs2}(t,u) = (\bar y_{t,u}-\bar y)^2.
$$
Then,
\begin{align}\label{eq:partitionvar}
\val_{\var}(u)=\frac{1}{n} \sum_{i=1}^n\val_{\cs2}(t,u).
\end{align}

From the additivity axiom of Shapley value, the variance explained Shapley value $\phi^{\var}_j$ is the average of the squared version of cohort Shapley value $\phi^{\cs2}_j$,
\begin{align}\label{eq:partitionphi}
\phi^{\var}_j({\bf X}) = \frac{1}{n}\sum_{i=1}^n \phi^{\cs2}_j(\bsx_i).
\end{align}
The point of equations~\eqref{eq:partitionvar} and \eqref{eq:partitionphi} is that squared values give an aggregation/disaggregation between local and global measures that satisfies the Shapley axioms.   Unsquared Shapley values, whether baseline or cohort, give uninteresting aggregate Shapley values of zero.

\section{Comparison of Shapley Value Importance}
Here we compare importances of three Shapley values, BS, ABS, and CS.
For BS, the baseline is $\bsx_b = (1/n)\sum_{i=1}^n\bsx_i$.
For ABS, all subjects are used for the baselines.
We also define squared versions.
Here, for simplicity, we assume no other subject matches the target on all predictors. Then the total value function of CS and its squared version are
\begin{align*}
\val_{\cs}(t,1{:}d) &= f(\bsx_t) - \bar y\quad\text{and}\\
\val_{\cs2}(t,1{:}d) &= (f(\bsx_t) - \bar y)^2.
\end{align*}

The value functions of BS and its squared difference version are
\begin{align*}
\val_{\bs}(t,1{:}d) &= f(\bsx_t) - f(\bsx_b)\quad\text{and}\\
\val_{\bs2}(t,1{:}d) &= (f(\bsx_t) - f(\bsx_b))^2.
\end{align*}
%
Now if $f(\bsx_b)=\bar y$, as for instance it would be for linear $f$ and $\bsx_b$ equal to the average of $\bsx_i$, then both $\val_\bs(t,1{:}d)=\val_\cs(t,1{:}d)$ and $\val_{\bs2}(t,1{:}d)=\val_{\cs2}(t,1{:}d)$.

The value function of ABS and its squared difference version are defined as
\begin{align*}
\val_{\abs}(t,1{:}d) &= \frac{1}{n} \sum_{i=1}^{n} (f(\bsx_t)- f(\bsx_i))
= f(\bsx_t) - \bar y\\
\val_{\abs2}(t,1{:}d) &= \frac{1}{n} \sum_{i=1}^{n} (f(\bsx_t) - f(\bsx_i))^2.
\end{align*}
We see that $\val_{\abs} = \val_{\cs}$. We can also show that
$\val_{\abs2} \ge\val_{\cs2}$
with equality when and only when $f(\bsx_b)=f(\bsx_t)$ for all $t$, a very unlikely possibility.
Thus, while ABS decomposes the same value that CS does, their squared counterparts decompose quite different quantities. 


\section{Derivation from Function Decomposition}
%
%
%
\citet{stru:kono:2010} derived a Shapley value under independent predictors uniformly distributed over finite discrete sets but they could as well be countable
or continuous and non-uniform, so long as they are independent (see the appendix).
We give a proof of this different from theirs, and show how both baseline Shapley value and cohort Shapley value are derived from functional decompositions.

\subsection{Importance Measures}
For Shapley value, every variable is either `in' or `out', and
so binary variables underlie the approach.
Here we compute Shapley values based on
function decompositions of a function $g$ defined on $\{0,1\}^d$.
The $2^d$ values of that function might themselves
be expectations, like the cohort mean in cohort Shapley
or the quantity \citet{stru:kono:2010} use (see the appendix),
but for our purposes here they are just $2^d$ numbers.


We use $\bse_j$ to represent the binary
vector of length $d$ with a one in position $j$
and zeroes elsewhere. This is the $j$'th standard
basis vector. We then generalize it to
$\bse_u=\bsone_u{:}\bszero_{-u}$
for $u\subseteq1{:}d$.
An arbitrary point in $\{0,1\}^d$ is denoted by $\bsz$.

Let $g$ be a function on $\{0,1\}^d$.
In our applications, the
total value to be explained is $g(\bsone)-g(\bszero)$,
with $\bsone$ corresponding to matching
the target in all $d$ ways and $\bszero$ corresponding
to no matches at all.
The value contributed by $u\subseteq1{:}d$
is $g(\bse_u)-g(\bszero)$.

\subsection{Shapley Value via Anchored Decomposition}

Because we use the anchored decomposition for functions
on $\{0,1\}^d$  instead of the ANOVA, we
do not need to define a distribution for $\bsz$.
The anchored decomposition on $\{0,1\}^d$
with anchor $\bsc=\bszero$
has a simple structure.
\begin{lemma}\label{lem:binanchdecomp}
For integer $d\ge1$,
let $g:\{0,1\}^d$ have the anchored decomposition
$g(\bsz) = \sum_{u\subseteq1{:}d}g_{u,\bszero}(\bsz)$
with anchor $\bszero$.
Then
$g_{u,\bszero}(\bse_w) = g_{u,\bszero}(\bsone)1_{u\subseteq w}$,
where $\bse_w=\bsone_w{:}\bszero_{-w}$.
\end{lemma}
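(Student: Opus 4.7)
The plan is to use two properties of the anchored decomposition that are stated earlier in Section~2.1, and to combine them via a case split on whether $u \subseteq w$ or not. The two properties are: (i) $g_{u,\bsc}(\bsz)$ depends on $\bsz$ only through the coordinates $\bsz_u$, by construction of any HDMR; and (ii) the vanishing property noted just after the definition of the anchored decomposition, namely that if $j \in u$ and $z_j = c_j$, then $g_{u,\bsc}(\bsz) = 0$. With anchor $\bsc = \bszero$, (ii) specializes to: if $j \in u$ and $z_j = 0$, then $g_{u,\bszero}(\bsz) = 0$.

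First I would handle the case $u \not\subseteq w$. Then there exists some index $j \in u$ with $j \notin w$, and by definition of $\bse_w = \bsone_w{:}\bszero_{-w}$ the $j$th coordinate of $\bse_w$ equals $0 = c_j$. Applying property (ii) immediately gives $g_{u,\bszero}(\bse_w) = 0$, matching the right-hand side since $1_{u \subseteq w} = 0$.

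Next I would handle the case $u \subseteq w$. Here every coordinate of $\bse_w$ indexed by $u$ equals $1$, so $(\bse_w)_u = \bsone_u = (\bsone)_u$. By property (i), $g_{u,\bszero}(\bse_w)$ depends only on $(\bse_w)_u$, hence $g_{u,\bszero}(\bse_w) = g_{u,\bszero}(\bsone)$, again matching the right-hand side since $1_{u \subseteq w} = 1$.

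There is no real obstacle; the result is essentially a direct consequence of the two quoted properties of the anchored decomposition. If one wanted to avoid invoking (i) and (ii) as stated and instead argue from scratch, one could prove both properties by induction on $|u|$ using the recursive defining formula $g_{u,\bszero}(\bsz) = g(\bsz_u{:}\bszero_{-u}) - \sum_{v \subsetneq u} g_{v,\bszero}(\bsz)$, and the only mildly delicate step would be verifying that the inductive hypothesis on strict subsets $v \subsetneq u$ yields both the coordinate-$u$ dependence and the vanishing at zero-anchored coordinates; but since the excerpt has already stated these, the four-line argument above suffices.
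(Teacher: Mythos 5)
Your proof is correct and follows essentially the same route as the paper's: a case split on whether $u\subseteq w$, using the vanishing property when some $j\in u$ has $(\bse_w)_j=0$ and the dependence of $g_{u,\bszero}$ only on $\bsz_u$ when $u\subseteq w$. The only cosmetic difference is that the paper re-derives the vanishing property in the binary case by pairing and cancelling terms in the inclusion--exclusion formula $g_{u,\bszero}(\bsz)=\sum_{v\subseteq u}(-1)^{|u-v|}g(\bsz_v{:}\bszero_{-v})$, whereas you invoke it as already stated in Section~2.1, which is legitimate.
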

\begin{proof}
See the appendix.
\end{proof}

Now we find the Shapley value for a function
on $\{0,1\}^d$ in an anchored decomposition.
\citet{stru:kono:2010} proved
this earlier using different methods.

\begin{theorem}\label{thm:shapanchored}
Let $g(\bsz)$ have the anchored decomposition
with terms $g_{u,\bszero}(\bsz)$ for $\bsz\in\{0,1\}^d$.
Let the set $u\subseteq 1{:}d$ contribute value
$g(\bse_u)-g(\bszero)$.
Then the total value is
$g(\bsone)-g(\bszero)$, and the Shapley
value for variable $j\in 1{:}d$ is
\begin{align}\label{eq:shapanchored}
\phi_j
= \sum_{u, \, j\in u}\frac
{g_{u,\bszero}(\bsone)-g_{u,\bszero}(\bszero)}{|u|}
= \sum_{u, \, j\in u}\frac
{g_{u,\bszero}(\bsone)}{|u|}.
\end{align}
\end{theorem}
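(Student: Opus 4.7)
The plan is to use the additivity axiom of Shapley value to split the game $\val(w) = g(\bse_w) - g(\bszero)$ into a sum of scaled unanimity games indexed by nonempty $u \subseteq 1{:}d$, and then solve each of those games by inspection via symmetry, efficiency, and dummy.

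First I would apply Lemma~\ref{lem:binanchdecomp} to write
$$g(\bse_w) = \sum_{u\subseteq 1{:}d} g_{u,\bszero}(\bse_w) = \sum_{u\subseteq w} g_{u,\bszero}(\bsone).$$
Taking $w=\emptyset$ isolates $g(\bszero) = g_{\emptyset,\bszero}(\bsone)$, so subtracting gives
$$\val(w) = \sum_{\substack{u\subseteq w \\ u\ne\emptyset}} g_{u,\bszero}(\bsone),$$
and setting $w=1{:}d$ recovers the claim that the total value is $g(\bsone)-g(\bszero)$.

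Next, for each nonempty $u\subseteq 1{:}d$ I would introduce the auxiliary game $\val_u(w) = g_{u,\bszero}(\bsone)\,1_{u\subseteq w}$, so that $\val = \sum_u \val_u$. By the additivity axiom $\phi_j = \sum_u \phi_j(\val_u)$. Each $\val_u$ is a scalar multiple of the unanimity game on carrier $u$: any $j\notin u$ is a dummy (adding $j$ to a coalition $w$ never changes whether $u\subseteq w$ holds), so by the dummy axiom $\phi_j(\val_u)=0$ for $j\notin u$; the $|u|$ players in $u$ are symmetric under $\val_u$, and efficiency then forces each to receive $g_{u,\bszero}(\bsone)/|u|$. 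Summing over $u$ containing $j$ yields the right-hand expression in~\eqref{eq:shapanchored}.

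To obtain the middle expression in~\eqref{eq:shapanchored}, I would invoke the standard vanishing property of the anchored decomposition recalled in the paper: if $j\in u$ and $z_j=c_j$, then $g_{u,\bsc}(\bsz)=0$. With $\bsc=\bszero$ this gives $g_{u,\bszero}(\bszero)=0$ for every nonempty $u$, so subtracting the zero terms $g_{u,\bszero}(\bszero)$ inside the sum over $j\in u$ is cosmetic. The most delicate step is the first one, namely pushing Lemma~\ref{lem:binanchdecomp} through to collapse the decomposition at $\bse_w$ and cleanly tracking the cancellation of the $u=\emptyset$ term against $g(\bszero)$; once that rewriting is in hand the rest is a textbook application of Shapley's axioms to unanimity games.
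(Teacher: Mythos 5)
Your proof is correct, but it takes a genuinely different route from the paper's. You use Lemma~\ref{lem:binanchdecomp} to rewrite the game as $\val(w)=\sum_{\emptyset\ne u\subseteq w}g_{u,\bszero}(\bsone)$, i.e.\ as a linear combination of unanimity games $1_{u\subseteq w}$, and then read off each summand's Shapley value from the dummy, symmetry and efficiency axioms before reassembling via additivity. The paper instead substitutes the anchored decomposition directly into the explicit formula~\eqref{eq:shapj}, reduces the indicator difference via $1_{u\subseteq v+j}-1_{u\subseteq v}=1_{j\in u}1_{j\notin v}1_{v\supseteq u-j}$, and evaluates the resulting binomial sums with the hockey-stick identity. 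Your argument is shorter and more conceptual: it isolates the structural fact that the anchored decomposition at anchor $\bszero$ is exactly the expansion of the game in the unanimity basis, and it needs no combinatorial identities. What it leans on is the axiomatic characterization --- specifically that the allocation for a unanimity game is pinned down by dummy, symmetry and efficiency, and that formula~\eqref{eq:shapj} satisfies all the axioms (plus the routine extension of two-game additivity to a sum of $2^d-1$ games). The paper's calculation, by contrast, is self-contained and ``mechanical'': it verifies the answer directly from the defining formula without invoking uniqueness for unanimity games. All the individual steps you give check out, including the cancellation of the $u=\emptyset$ term against $g(\bszero)$ and the observation that $g_{u,\bszero}(\bszero)=0$ for $u\ne\emptyset$ reconciles the two expressions in~\eqref{eq:shapanchored}.
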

\begin{proof}
We give a purely combinatorial proof of this result, using the hockey-stick identity, in our appendix.
\end{proof}


\subsection{Baseline, Cohort and Variance Shapley}
For $u\subseteq1{:}d$ and $\bse_u=\bsone_u{:}\bszero_{-u}\in\{0,1\}^d$ the $g$ function for baseline Shapley is
$$g(\bse_u)=\val_{\bs}(t,u) = f(\bsx_{t,u}{:}\bsx_{b,-u}) - f(\bsx_b).$$
The corresponding function for cohort Shapley is
$$g(\bse_u) = \val_{\cs}(t,u) = \bar y_{t,u}-\bar y.$$
In variance Shapley $g(\bse_u)=\var(\e(f(\bsx)\giv\bsx_u))$. It is Shapley value applied to the anchored decomposition of $g$ where the $2^d$ values $g(\bse_u)$ are in turn defined through an ANOVA decomposition.


\section{Examples}\label{sec:examples}

In this section we include some numerical examples of cohort Shapley.
First we look at variables that predict survival in the Titanic data set. Then we look at the variables that predict value in the Boston housing data set. 

\subsection{Titanic Data}\label{sec:titanic}
Here we consider a subset of the Titanic passenger data set
(from \url{http://biostat.mc.vanderbilt.edu/wiki/pub/Main/DataSets/titanic3.csv})
containing
1045 individuals with complete records.
This data was collected from Encyclopedia Titanica (\url{https://www.encyclopedia-titanica.org/}) and
has been used by Kaggle (see
\url{https://www.kaggle.com/c/titanic/data}) to illustrate machine learning.
As the function of interest, we construct a logistic regression model which predicts
`survival'
based on the predictors
`pclass', `sex', `age', `sibsp', 'parch', and `fare' (defined at the Vanderbilt web site).
Our logistic model outputs an estimated probability of survival, $f(\bsx_t)\in[0,1]$. To calculate the cohort Shapley values, we define similarity as exact for the discrete predictors `pclass', `sex',
`sibsp', and `parch'
and within a distance of less than 1/10 of the variable range on the continuous predictors `age' and `fare'.

Figure~\ref{fig:titan_vs} shows the variance Shapley values, known as `Shapley effects' \citep{song:nels:stau:2016}. They  decompose the total variance into importance of predictor variables.
The results indicate that `sex' is the most important followed by `pclass', and `fare'.

\begin{figure}[t]
  \centering
  \includegraphics[width=0.8 \textwidth]{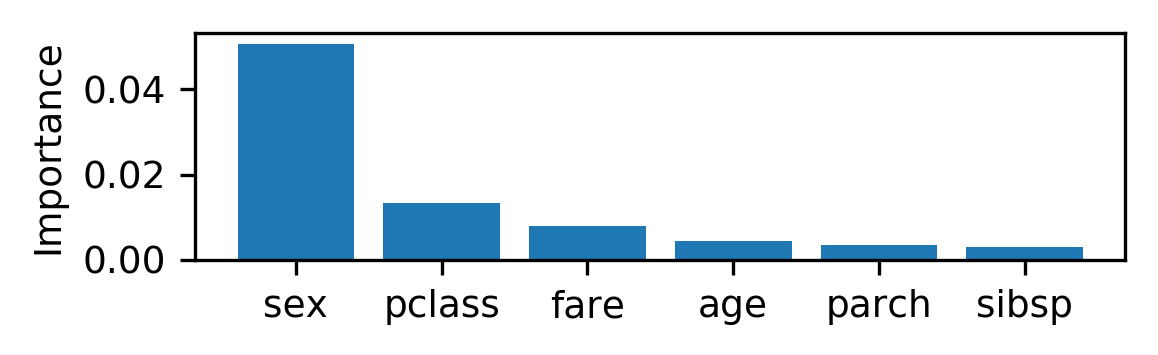}
  \caption{Shapley effects for the  Titanic data set.}
  \label{fig:titan_vs}
\end{figure}

\begin{figure}[t]
  \centering
  \includegraphics[width=0.8 \textwidth]{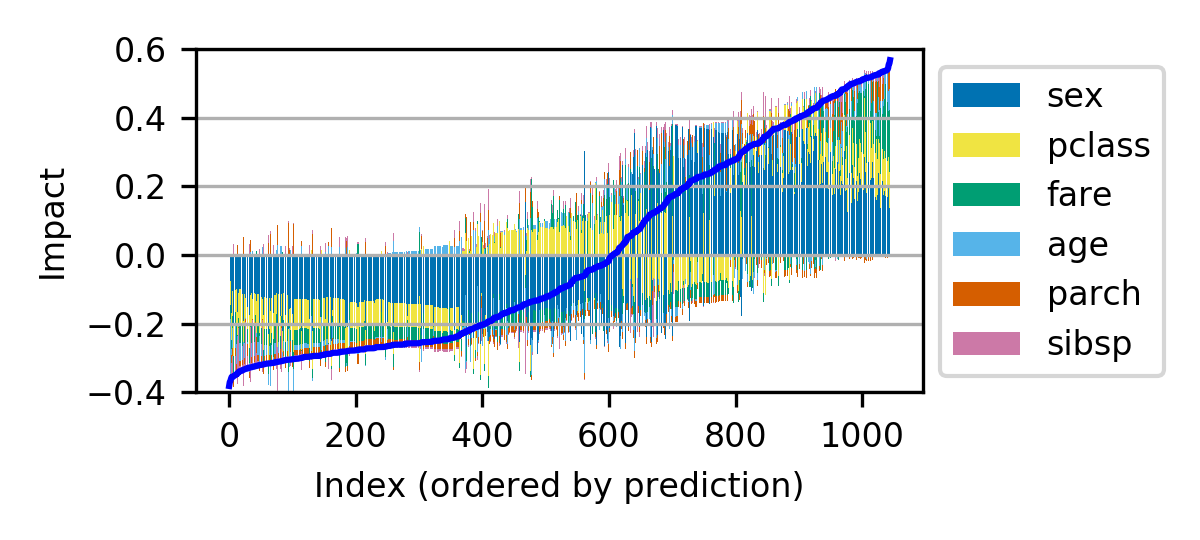}
  \caption{Cohort Shapley values stacked vertically for all passengers, ordered by estimated survival probability.  The blue overlay is $f(\bsx_t)-\bar y$ for each passenger.}
  \label{fig:titan_SHlog}
\end{figure}

Figure~\ref{fig:titan_SHlog} shows the cohort Shapley values for each predictor stacked vertically for every individual. The individuals are ordered by their predicted survival probability.  Starting at zero, we plot a blue bar up or down according to the cohort Shapley value for the `sex' variable.  Then comes a
yellow bar for `pclass' and so on.
The squared version of the cohort Shapley is a decomposition of Variance Shapley in Figure~\ref{fig:titan_vs}. For a  comparison of cohort, baseline, all baseline Shapley values, and for their squared versions, see the appendix.

A visual inspection of Figure~\ref{fig:titan_SHlog} reveals clusters of individuals with similar Shapley values for which we could potentially develop a narrative.  As just one example, we see passengers with indices between roughly
400 and 600
who have negative Shapley values for `sex' but positive Shapley values for `pclass' while their predicted value is below the mean.  Many of the these passengers are men who are not in the lowest class.

\subsection{Boston Housing Data}\label{sec:boston}
The Boston housing dataset
has 506 data points with 13 predictors and the median house value as a response \citep{harr:rubi:1978}.
Each data point corresponds to a vicinity in the Boston area.
We fit a regression model to predict the house price from
the predictors using XGBoost \citep{chen:gues:2016}.

This dataset is of interest to us because it includes some
striking examples of dependence in the predictors.
For instance, the variables `CRIM' (a measure of per capita crime)
and `ZN' (the proportion of lots zoned over 25{,}000 square feet)
can be either near zero or large, but none of the 506 data
points have both of them large and similar phenomena
can be seen in some other scatterplots. 

We will compute baseline Shapley and cohort Shapley for one target point.
That one is the 205'th case in the sklearn python library
and also in the mlbench R package. 
This target was chosen to be one for which some synthetic
points in baseline Shapley would be far from any real data, but we did not optimize
any criterion measuring that distance, and many of the other 506 points share
that property.
For Shapley values of all subjects, see the appendix.
For cohort Shapley, we consider predictor values to be similar if their distance is less than
1/10 of the difference between the 95'th and 5'th
percentiles of the predictor distribution.

Figure~\ref{fig:boston_s204_cohort_bdys}
shows two scatterplots of the Boston housing data.
It marks the target and baseline points, depicts the cohort
boundaries and it shows housing value in gray scale.
The baseline point is $\bsx_b =(1/n)\sum_{i=1}^n\bsx_i$, the
sample average, and it is not any individual subject's point partly because it
averages some integer valued predictors.
Here, the predicted house prices are
28.38 for the subject and 13.43 for the baseline.
The figure also shows some of
the synthetic points used by baseline Shapley.
Some of those points are far from any real data points even in
these two dimensional projections.
Model fits at such points are questionable.

\begin{figure}[t]
  \centering
  \includegraphics[width=0.8 \textwidth]{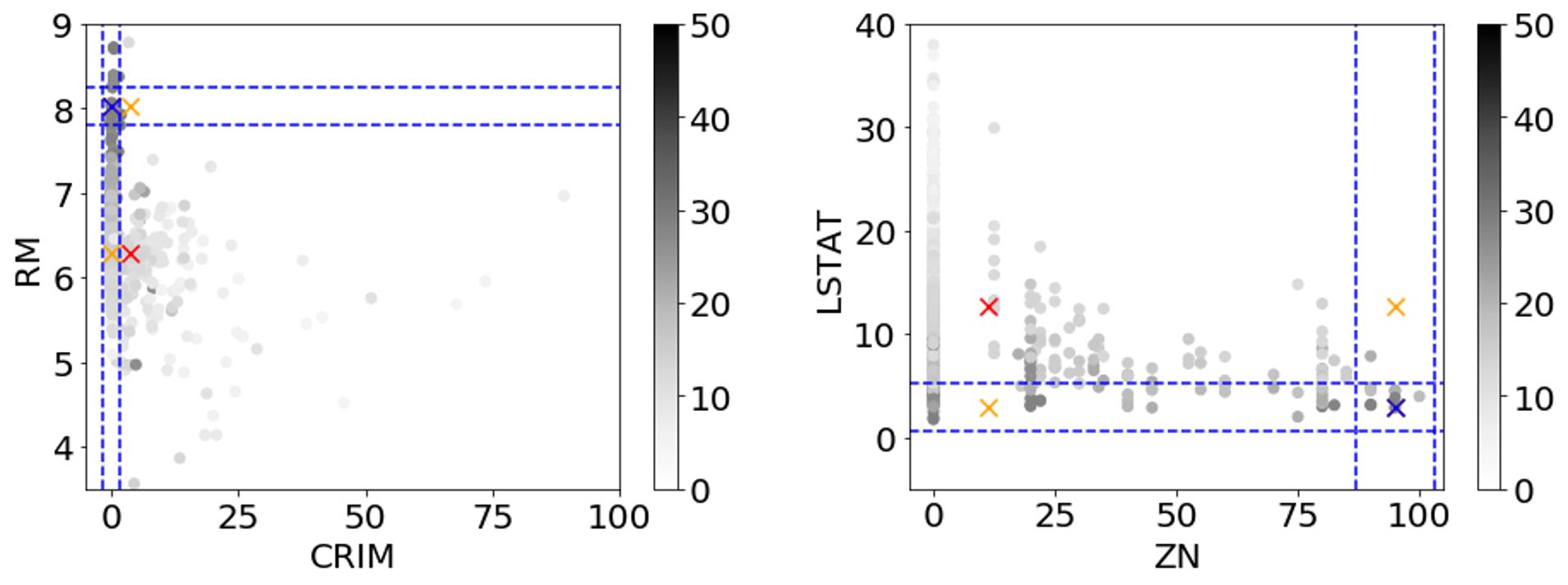}
  \caption{
Two scatterplots of the Boston housing data.
The target point is a blue X.  The baseline is a red X.
Synthetic points used by baseline Shapley are orange X's.
Dashed blue lines delineate the cohorts we used.}
  \label{fig:boston_s204_cohort_bdys}
\end{figure}

Figure~\ref{fig:boston_s204_value} shows
cohort Shapley and baseline Shapley values for this target
subject. 
On baseline Shapley, we see that `CRIM',
`RM', and `LSTAT' have very large impact
and the other variables do not.
For cohort Shapley, the most impactful predictors
are  `RM', `ZN' and `LSTAT' followed by a very
gradual decline.

 \begin{figure}[tb]
   \centering
   \includegraphics[width=0.8 \textwidth]{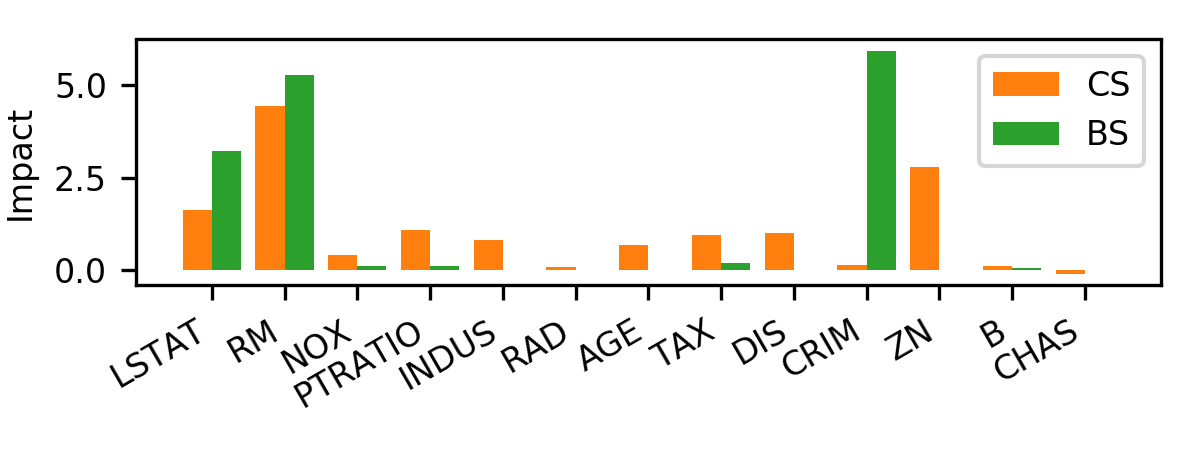}
   \caption{Cohort Shapley and baseline Shapley values for subject 205 of the Boston housing data.}
   \label{fig:boston_s204_value}
 \end{figure}

In baseline Shapley `CRIM'
was the most important variable, while in cohort Shapley
it is one of the least important variables.
We think that the explanation is from the way that baseline
Shapley uses the data values at the upper orange cross
in the left plot of Figure~\ref{fig:boston_s204_cohort_bdys}.
The predicted price for a house at the synthetic
point given by the upper orange cross is 14.17,
which is much smaller than that of the subject, and
even quite close to the baseline mean.
This leads to the impact of `CRIM' being very high.
Data like that synthetic point were not present in the
training set and so that value represents an extrapolation where
we do not expect a good prediction.
We believe that an unreliable prediction there gave
the extreme baseline Shapley value that we see for `CRIM'.

Related to the prior point,
refining the cohort on `RM' reduces its cardinality
much more than refining the cohort on `CRIM' does.
Because cohort Shapley uses averages of actual subject
values, refining the target on `CRIM' removes fewer
subjects and in this case makes a lesser change.

The right panel in Figure~\ref{fig:boston_s204_cohort_bdys}
serves to illustrate the effect of dependent predictors on cohort
Shapley value. The model for price hardly uses `ZN', if at all,
and the baseline Shapley value for it is zero.
Baseline Shapley atributes a large impact to `LSTAT' and
nearly none to `ZN'.
For either of those predictors, the cohort mean is higher
than the global average, and
both `LSTAT' and `ZN' have high impact in cohort Shapley.

We can explain the difference as follows.
As `ZN' increases, the range of `LSTAT' values
narrows, primarily by the largest `LSTAT' values decreasing
as `ZN' increases.
Refining on `ZN'  has the side effect of lowering `LSTAT'.
Even if `ZN' itself is not in the model, the cohort Shapley
value captures this effect.
Baseline Shapley cannot impute a nonzero impact
for a variable that the model does not use.

\section{Discussion}\label{sec:discussion}

Cohort Shapley resolves two conceptual problems in baseline
Shapley and other interventional methods. First, it does not use any impossible
or even unseen predictor combinations. Second, if two predictors
are identical then it is perfectly predictable that their importances
will be equal rather than subject to details of which black box model
was chosen or what random seed if any was used in fitting that model. Cohort Shapley also allows the user to study indirect influence which is impossible via baseline Shapley.

It does not escape the catch-22 that \citet{kuma:2020} mention.  Cohort Shapley requires the user to specify a distribution on predictors.  That distribution is the empirical distribution coarsened by a similarity definition.  The importance measure will depend on the chosen similarity measure which can be an advantage but may also be a burden.  
Baseline Shapley and cohort Shapley have different counterfactuals
and they address different problems.

There is a related literature on how finely a continuous
variable should be broken into categories.
\citet{gelm:park:2009} suggest as few as three levels
for the related problem of choosing a discretization
prior to fitting a model.
We have used more levels but this remains an
area for future research.

Cohort Shapley supports another choice not possible with baseline Shapley. We can replace the function values $f(\bsx_i)$ by the original response values in the cohort averages. The resulting sensitivity measures are for a similarity-based nearest neighbor rule.

In our appendix we investigate numerically the proportion of unrealistic data points created when sampling independently from marginal distributions.  Using a working definition of `realistic' we find that marginal sampling produces many more unrealistic values than one gets in hold out data sets.

\section*{Ethics Statement}
Our cohort Shapley method can be used to detect unethical practices such as redlining, even when the protected variable or variables were not used in the prediction model. We take care to point out that a large cohort Shapley value for a protected variable is not in itself proof of redlining. Followup investigations would be required to reach that conclusion.

\section*{Acknowledgments}
We thank Masashi Egi of Hitachi for valuable comments.
This work was supported by the U.S. National Science Foundation
under grant IIS-1837931.

\bibliographystyle{apalike}
\bibliography{paper}

\vfill\eject

\section*{Appendix 1 Approach of \citet{stru:kono:2010}}
To get a Shapley value for predictor variables,
we must first define the value produced by a subset of them.
The approach of  \citet{stru:kono:2010}
begins with a vector $\bsx$
of independent random predictors
from some distribution $F$.
They used independent predictors uniformly distributed
over finite discrete sets but they could as well be countable
or continuous and non-uniform, so long as they are independent.
For a target subject $t$, let
$f(\bsx_t)$ be the prediction for that subject.
They define the value of the predictor set $u\subseteq1{:}n$ by
$$
\Delta(u) = \e( f(\bsx_{t,u}{:}\bsx_{r,-u}))-\e(f(\bsx_r))
$$
with expectations taken under a
random baseline $\bsx_r\sim F$.
The distribution $F$ has independent components.
In words, $\Delta(u)$ is the expected change in
our predictions at a random point $\bsx_r$
that comes from specifying that
$\bsx_j=\bsx_{t,j}$ for $j\in u$, while leaving $\bsx_j = \bsx_{r,j}$ for $j\not\in u$.

In their formulation, the total value to be explained is
$$\Delta(1{:}d) = f(\bsx_t)-\e(f(\bsx_r)),$$
the extent to which $f(\bsx_t)$ differs from a hypothetical
average prediction over independent predictors.
The subset $u$ explains
$\Delta(u)$, and from that they derive Shapley value.
They define quantities $I(u)$ via $I(\emptyset)=0$ and
$$
\Delta(u) = \sum_{v\subseteq u}I(u).
$$
They prove that the Shapley value for predictor $j$ is
$$
\phi_j = \sum_{u\subseteq1{:}d,\,j\in u} \frac{I(u)}{|u|}.
$$
We give a proof of this, different from theirs,
making use of the anchored decomposition.
While their $\Delta(u)$ is defined via expectations of independent random
variables, their Shapley value comes via the anchored decomposition, 
applied to those expectations, not an ANOVA.

\section*{Appendix 2 Proof of Lemma 1}
\begin{proof}
The inclusion-exclusion formula for the binary anchored
decomposition, using $\bsz\in\{0,1\}^d$ is
$$
g_{u,\bszero}(\bsz) = \sum_{v\subseteq u}(-1)^{|u-v|}g(\bsz_v{:}\bszero_{-v}).
$$
Suppose that $z_j=0$ for $j\in u$. Then, splitting up the alternating sum
\begin{align*}
g_{u,\bszero}(\bsz) &= \sum_{v\subseteq u-j}(-1)^{|u-v|}
(g(\bsz_v{:}\bszero_{-v})- g(\bsz_{v+j}{:}\bszero_{-v-j}))\\
&= 0
\end{align*}
because $\bsz_v{:}\bszero_{-v}$
and $\bsz_{v+j}{:}\bszero_{-v-j}$ are the same point when $z_j=0$.
It follows that $g_{u,\bszero}(\bse_w)=0$ if $u\subseteq w$ does not hold.

Now suppose that $u\subseteq w$.
First $g_{u,\bszero}(\bsz)=g_{u,\bszero}(\bsz_u{:}\bsone_{-u})$
because $g_{u,\bszero}$ only depends on $\bsz$ through $\bsz_u$.
From $u\subseteq w$ we have $(\bse_w)_u=\bsone_u$.
Then
$g_{u,\bszero}(\bse_w)=g_{u,\bszero}(\bsone_u{:}\bsone_{-u})=
g_{u,\bszero}(\bsone)$, completing the proof.
\end{proof}

\section*{Appendix 3 Proof of Theorem 1}
\begin{proof}
For $u\ne\emptyset$, $g_{u,\bszero}(\bszero)=0$,
and so the two expressions for $\phi_j$ in~\eqref{eq:shapanchored} are equal.
From the definition of Shapley value,
\begin{align}\label{eq:interpsum}
\phi_j
&= \frac1d\sum_{v\subseteq -j}{ d-1\choose |v|}^{-1}
(g( \bse_{v+j})-g(\bszero))-(g(\bse_v)-g(\bszero))\notag\\
 &= \frac1d\sum_{v\subseteq -j}{ d-1\choose |v|}^{-1}(g( \bse_{v+j})-g(\bse_v))\\
 &= \frac1d\sum_{v\subseteq -j}{ d-1\choose |v|}^{-1}
\sum_{u\subseteq1{:}d}(g_{u,\bszero}( \bse_{v+j})-g_{u,\bszero}(\bse_v)).\notag
\end{align}
By Lemma~\ref{lem:binanchdecomp},
\begin{align*}
\phi_j
&= \frac1d\sum_{v\subseteq -j}{ d-1\choose |v|}^{-1} \times \\
&\sum_{u\subseteq1{:}d}(g_{u,\bszero}( \bsone)1_{u\subseteq v+j}-g_{u,\bszero}(\bsone)1_{u\subseteq v})\\
&= \frac1d\sum_{u\subseteq1{:}d}
g_u(\bsone)\sum_{v\subseteq -j}{ d-1\choose |v|}^{-1}
(1_{u\subseteq v+j}-1_{u\subseteq v}).
\end{align*}
Now
\begin{align}\label{eq:nonzero}
1_{u\subseteq v+j}-1_{u\subseteq v}
=1_{j\in u}1_{j\not\in v}1_{v\supseteq u-j}.
\end{align}
The cardinality of $v$ for which~\eqref{eq:nonzero}
is nonzero ranges from $|u|-1$ to $d-1$ and so
\begin{align*}
\phi_j
&= \frac1d\sum_{u,\, j\in u}g_{u,\bszero}(\bsone) \times \\
&\sum_{r=|u|-1}^{d-1} { d-1\choose r}^{-1}\sum_{v\subseteq -j}
1_{j\in u}1_{j\not\in v}1_{v\supseteq u-j}1_{|v|=r}\\
&= \frac1d\sum_{u,\, j\in u}g_{u,\bszero}(\bsone)
\sum_{r=|u|-1}^{d-1}
{ d-1\choose r}^{-1}{ d-|u| \choose r-|u|+1},
\end{align*}
because $v$ contains $u-j$ and $r-|u|+1$
additional indices from $-u$.
Simplifying
\begin{align*}
{ d-1\choose r}^{-1}{ d-|u| \choose r-|u|+1}
&=
{ r \choose |u|-1}
{ d-1 \choose |u|-1}^{-1}
\end{align*}
and
$$
\sum_{r=|u|-1}^{d-1}{r\choose |u|-1} = {d \choose |u|}
$$
by the ``hockey-stick identity''.
Therefore
\begin{align*}
\phi_j
&= \frac1d\!\sum_{u,\, j\in u}g_{u,\bszero}(\bsone) \!
{d \choose |u|}\!{d-1 \choose |u|-1}^{-1}\!
= \!\sum_{u,\, j\in u}\frac{g_{u,\bszero}(\bsone)}{|u|}.\qedhere
\end{align*}
\end{proof}

The proof in \citet{stru:kono:2010} proceeds
by substituting the inclusion-exclusion
identity into the first expression for $\phi_j$ in~\eqref{eq:shapanchored}
and then showing that it is equal to the
definition of Shapley value.
They also need to explain some of their steps in prose
and the version above provides a more `mechanical' alternative approach.

\section*{Appendix 4 Comparison of Shapley Values}
Here we compare local and global versions of several Shapley value measures for the Titanic and Boston housing data sets.
We include baseline, all baseline and cohort Shapley values as well as their squared versions.
We also calculate variance Shapley values and squared local Shapley values aggregated over all subjects for global sensitivity analysis.

\subsection*{Titanic Data}
Figure~\ref{fig:titan_shapleys} shows the comparison of local Shapley values including cohort Shapley (CS), baseline Shapley (BS), and all baseline Shapley (ABS) and their squared versions.
Each figure shows the Shapley value for each predictor stacked vertically for individual subjects arranged along the horizontal axis. The subjects are ordered by their predicted values.
The black overlay is the sum of predictor contributions for each individual subject.
The blue overlay represents the sum of cohort Shapley values for comparison.

For computation of Shapley values on all 1045 data subjects using a desktop PC with an Intel Core i7-8700 CPU, it spent $6.27s$ for CS, $21.58s$ for BS, and $32.7s$ for ABS, respectively.

Comparing CS (a-1) and squared CS (a-2), the sign of each predictor is flipped whenever the total impact in CS is negative.
The negative importance in squared CS is counter-intuitive but it indicates the predictor negatively affects the prediction outcome for each individual level the same as in the non-squared version of CS.

BS (b-1) and squared BS (b-2) sum to different values than CS (a-1) and squared CS (a-2), respectively, due to the difference between  $f(\bsx_b)$ and $\bar y$.
%
Comparing BS and CS, CS attributes significant importance to `fare' for subjects with high and low prediction, but BS does not and the contribution of `sex' is reduced for these subjects.

Squared ABS (b-2) sums to much larger values for each subject compared with cohort Shapley and baseline Shapley while non-squared ABS (c-1) looks nearly equivalent to BS (b-1).

\begin{figure*}[tb]
  \centering
  \includegraphics[width=1.0 \textwidth]{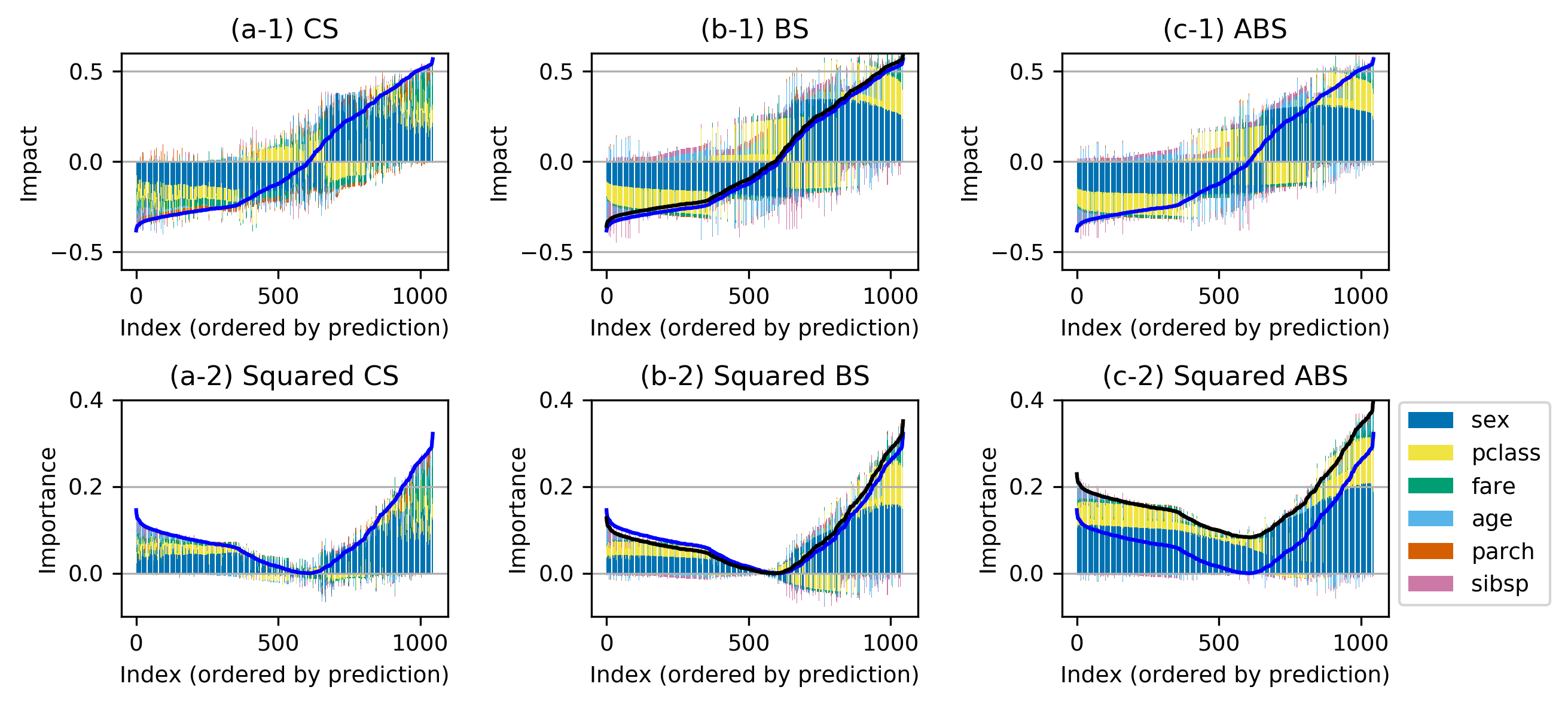}
  \caption{Local Shapley values for the Titanic data. The black curve is their sum.  The blue curve is the sum of cohort Shapley values.}
  \label{fig:titan_shapleys}
\end{figure*}

Figure~\ref{fig:titan_realistic} decomposes the importance measures into their contributions from `realistic' and `unrealistic' differences among synthetic data points.  It is a local comparison for the first point in the data set, i.e., $t=1$.
Our working definition of what makes a feature combination unrealistic is discussed later.
The figure indicates that a substantial proportion of the BS value is derived from unrealistic data values. The baseline point is the average $\bsx_b=(1/n)\sum_{i=1}^{n}\bsx_i$ of predictors, taken over all subjects. Note that some components of the average are already not realistic; for instance the averages of the binary variable `sex' and categorical variable `pclass'.
In addition to unrealistic levels there are also unrealistic combinations.
The ABS by comparison has a much lower impact from unrealistic sampling because it makes comparisons to actual points. Even if the unrealistic contributions are distributed to other predictors, the top two predictors `sex' and `pclass' will not change.
ABS samples only existing values in each predictor.
Through replacing predictor from one subject to another subject, it sometimes goes to unrealistic combination, but that was rare in the Titanic data set.

\begin{figure}[t]
  \centering
  \includegraphics[width=0.8\textwidth]{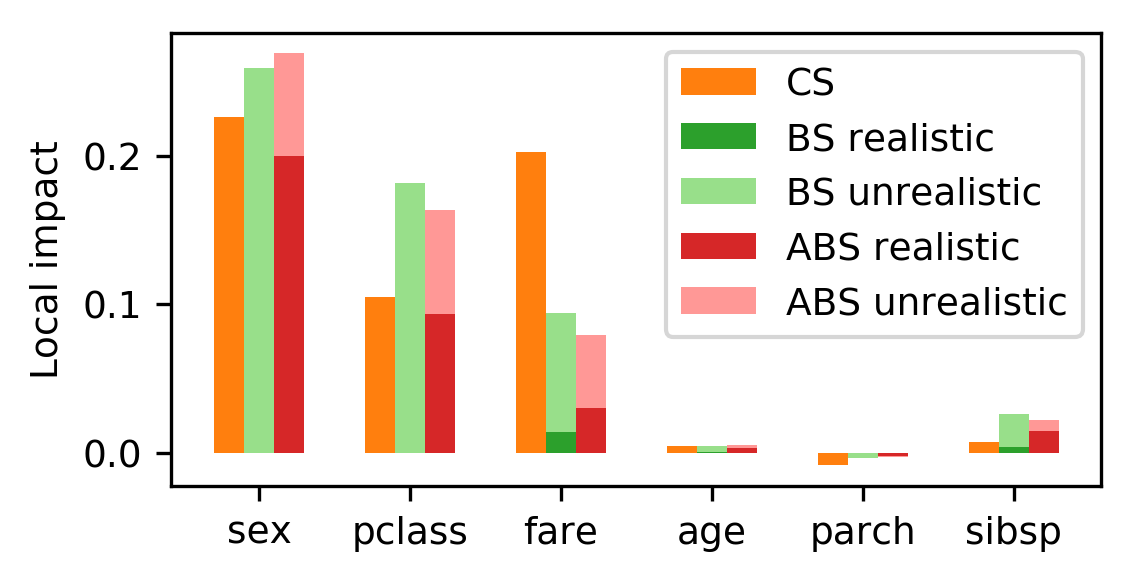}
  \caption{Local Shapley values for subject $1$ of Titanic data set.}
  \label{fig:titan_realistic}
\end{figure}

Figure~\ref{fig:titan_global} shows the comparison of aggregated squared Shapley values.
Squared CS is theoretically equivalent to variance Shapley (VS), though we see small numerical differences. Compared to VS, BS overestimates the impact of `pclass' and `sex', and underestimates that of `sibsp' and `parch'. ABS exaggerates this tendency and the importance estimate becomes nearly double that of CS.

\begin{figure}[t]
  \centering
  \includegraphics[width=0.8 \textwidth]{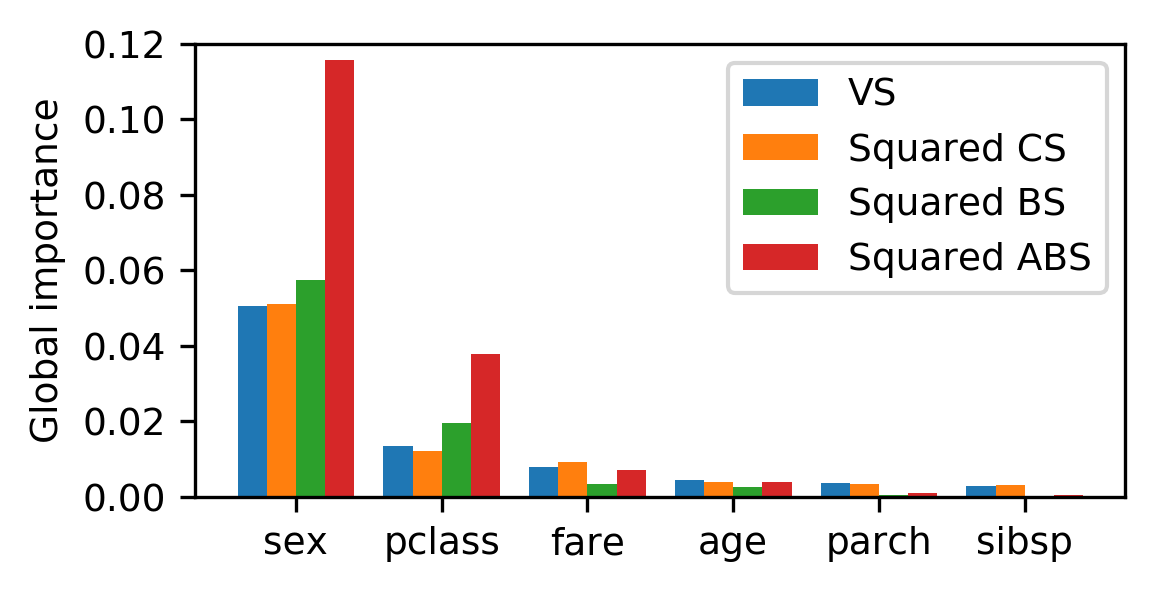}
  \caption{Global Sensitivity Analysis with Shapley values aggregating over all subjects for Titanic data set.}
  \label{fig:titan_global}
\end{figure}

\subsection*{Boston Housing Data}
%
%
%

Figure~\ref{fig:boston_shapleys} shows the comparison of local Shapley values including cohort Shapley (CS), baseline Shapley (BS) and their squared versions.
Each figure shows the Shapley value for each predictor stacked vertically for every individual in horizontal axis. The individuals are ordered by their prediction.
The black overlay is the sum of predictor contributions for each individual subject.
The blue overlay represents the sum of cohort Shapley values for comparison.

For computation of Shapley values on all 506 data subjects using a desktop PC with an Intel Core i7-8700 CPU, it spent $1410.6s$ for CS, $9489.9s$ for BS, and $20994.8s$ for ABS, respectively.

The variance Shapley later shown in Figure~\ref{fig:boston_global} is decomposed into squared CS for individual subjects in Figure~\ref{fig:boston_shapleys}(a-2).
Comparing BS (b-1) and CS (a-1), BS attributes most importance to `STAT', `RM', 'CRIM' and `NOX' while CS attributes importance to more predictors.

\begin{figure*}[tb]
  \centering
  \includegraphics[width=1.0\textwidth]{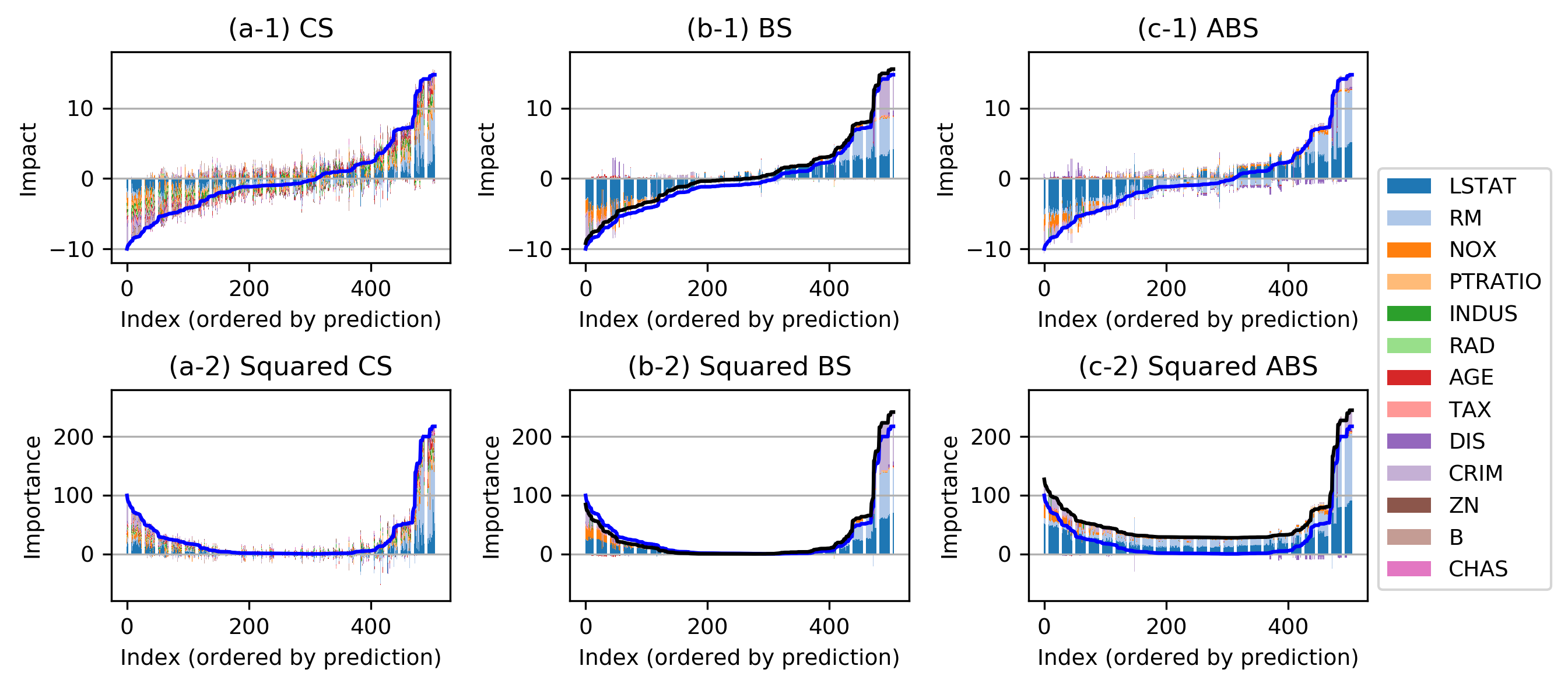}
  \caption{Local Shapley values for Boston housing data set.}
  \label{fig:boston_shapleys}
\end{figure*}

Figure~\ref{fig:boston_realistic} decomposes the local importance measures for subject $t=205$ into their contributions from `realistic' and `unrealistic' differences among synthetic data points. 
The figure indicates that a substantial proportion of the BS value is derived from unrealistic data values. The baseline point is the average $\bsx_b=\sum_{i=1}^{n}\bsx_i$ of predictors, taken over all subjects. In the Boston housing data, all predictors are continuous so that unrealistic samples are all due to unrealistic combinations.

\begin{figure}[t]
  \centering
  \includegraphics[width=0.8\textwidth]{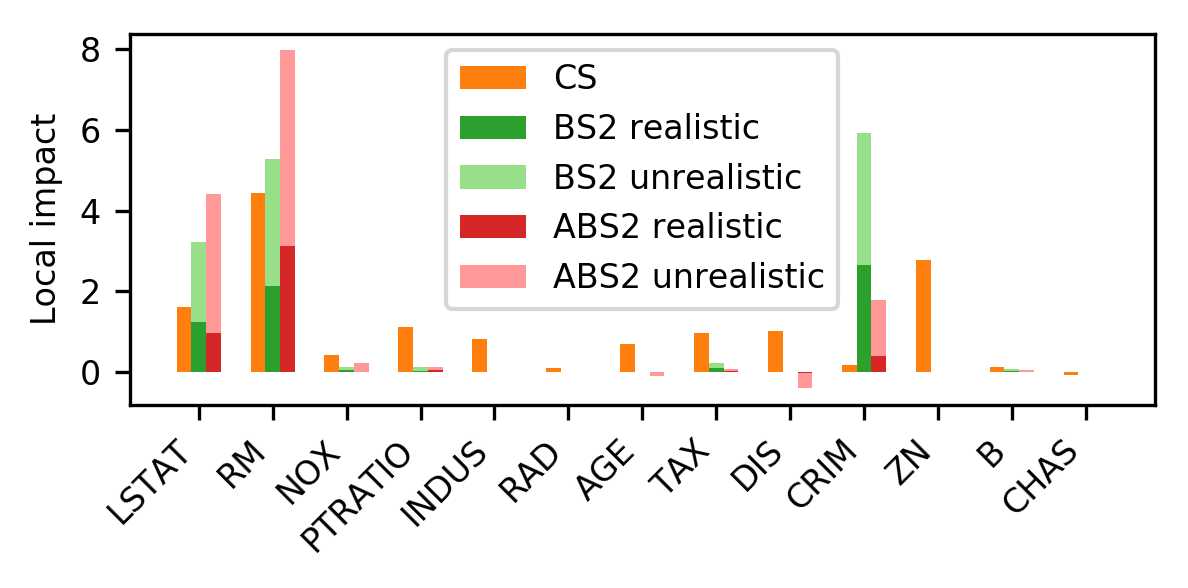}
  \caption{Local Shapley values for subject $205$ of Boston housing data set.}
  \label{fig:boston_realistic}
\end{figure}

Figure~\ref{fig:boston_global} shows the comparison of aggregated squared Shapley values.
As we saw, squared CS is equivalent to variance Shapley (VS).
But BS places much greater
importance on `LSTAT', `RM', and `CRIM' that is observed in visual inspection of Figure~\ref{fig:boston_shapleys}(b-1)(b-2).

\begin{figure}[t]
  \centering
  \includegraphics[width=0.8 \textwidth]{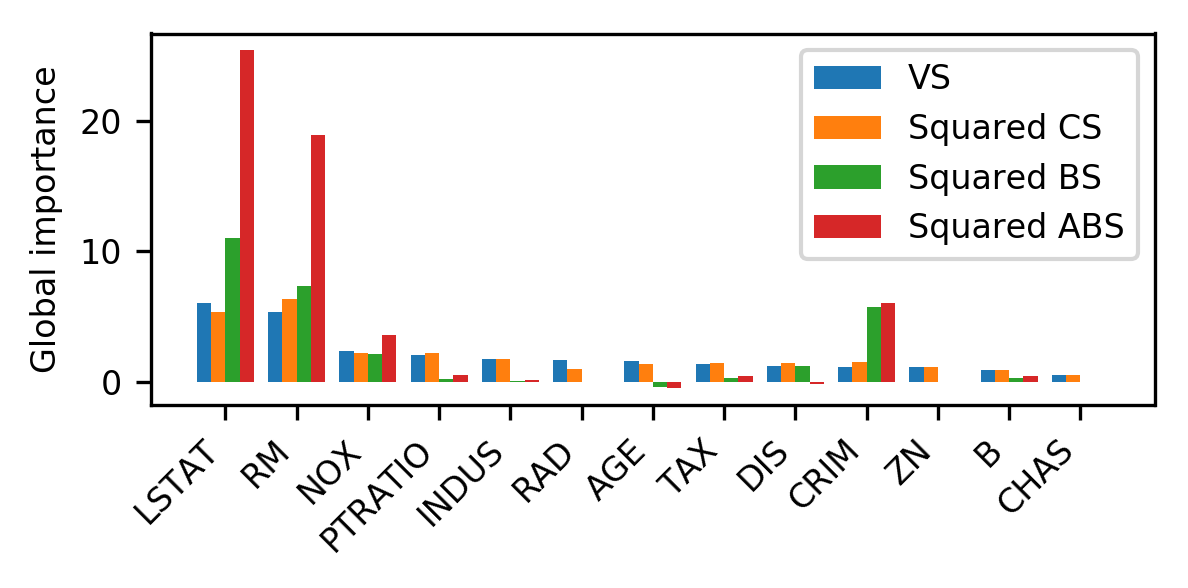}
  \caption{Global Sensitivity Analysis with Shapley values aggregating over all subjects for Boston housing data set.}
  \label{fig:boston_global}
\end{figure}

\section*{Appendix 5 Calibration of Distance Threshold in Realistic Sampling Analysis}
Baseline Shapley occasionally includes unrealistic input data where the model output is questionable.
This arises because the underlying distribution is the product of empirical marginal distributions.
Here, we study the prevalence of unrealistic data samples.

We generate data samples with independent components using the joint marginal distributions in a real world data set.
A sample point is ``realistic'' if there is a subject $i$ to which it is similar in all $d$ predictors. Otherwise it is  ``unrealistic".
We used the similarity measure defined in the Similarity section in this paper. 
Then, we measure the proportion of ``realistic" data samples to quantify the reliability.
We used the distance based similarity function, varying the threshold
in $[0.05, 1.0]$ to explore the effect of the threshold choice. 

A standard technique in machine learning is to hold out data and investigate how well predictions generalize. This motivates comparing the realism of sampling independently from marginal distributions to sampling held out data.
We compare training to test set percentages using holdouts of 10\%, 20\% and 30\% of the data.

Figure~\ref{fig:titan_similarity} shows the resulting proportions of realistic samples when sampling from joint marginal distribution compared to held out data on the Titanic data set,
averaged across $100$ runs. 
We see that held-out data score much higher realism rates than we get from independent sampling of the marginal distributions.  There are only small differences as the hold-out fraction changes from $10$\% to 30\%.
At distance threshold $0.2$ held-out data are realistic 90 to 96 percent of the time, while marginal sampling generates only 86 percent realistic data.

Figure~\ref{fig:boston_similarity} shows the evaluation results on Boston housing data set,
averaged across $100$ runs. 
Once again, the proportion of realistic values does not depend strongly on the hold-out fraction and they are are over 90\% at the distance threshold of 0.2.
The gap between train-test splitting and marginal distribution sampling is much larger compared with the Titanic data. Only about 13\% of the marginal distribution samples are realistic. This is partly to be expected because there are more predictor variables. Furthermore the strongly patterned bivariate distributions in this data serve to make for unrealistic combinations.

\begin{figure}[t]
  \centering
  \includegraphics[width=0.8 \textwidth]{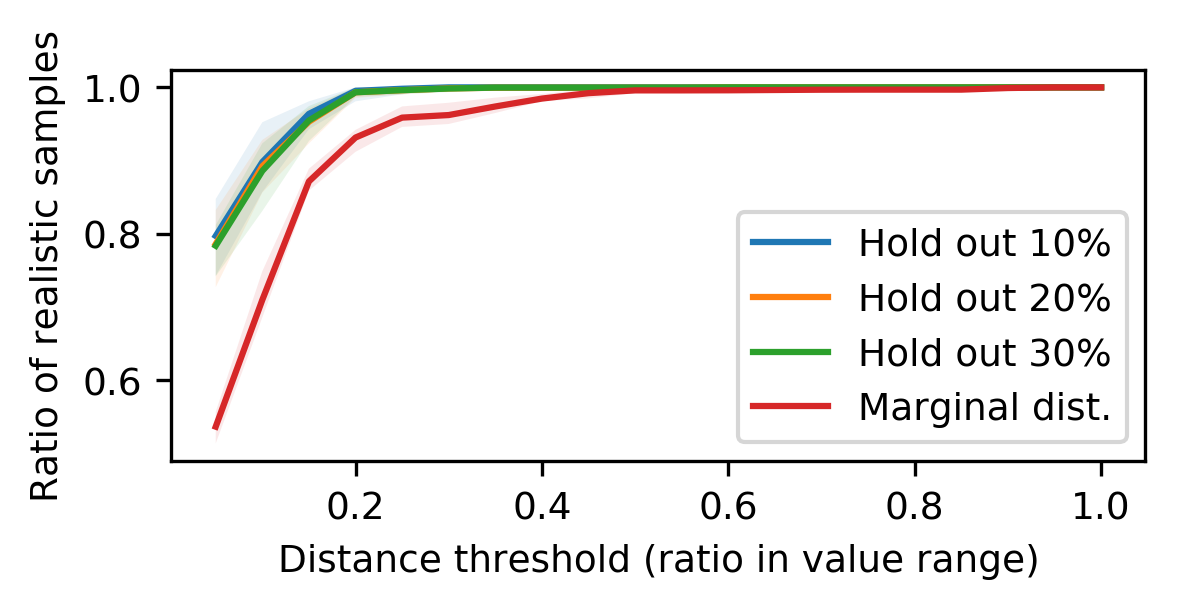}
  \caption{Realistic sampling analysis for Titanic data set.}
  \label{fig:titan_similarity}
\end{figure}

\begin{figure}[t]
  \centering
  \includegraphics[width=0.8 \textwidth]{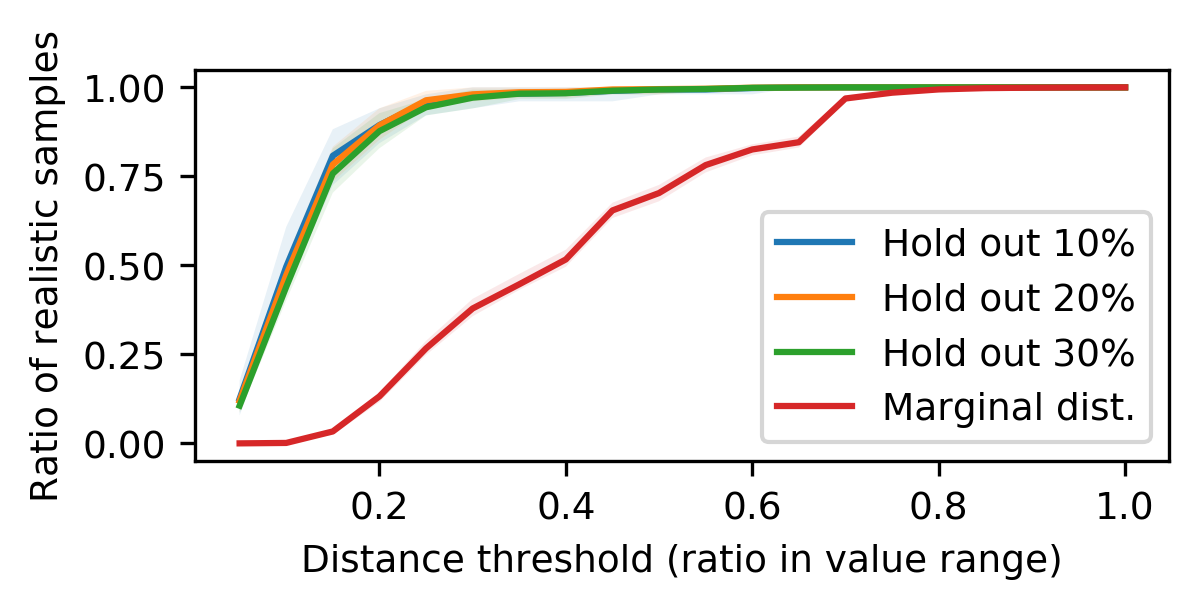}
  \caption{Realistic sampling analysis for Boston housing data set.}
  \label{fig:boston_similarity}
\end{figure}

\end{document}